\newtheorem{example}{Example}
\newtheorem{theorem}{Theorem}
\newtheorem{lemma}{Lemma}
\newtheorem{definition}{Definition}
\newtheorem{proposition}{Proposition}
\title{Simulating Petri nets with Boolean Matrix Logic Programming}
\author{
Lun Ai$^1$
\and
Stephen H. Muggleton$^1$ \and
Shishun Liang$^2$\and
Geoff S. Baldwin$^{2}$
\affiliations
$^1$Department of Computing, Imperial College London, London, UK\\
$^2$Department of Life Science, Imperial College London, UK.\\
\emails
\{lun.ai15, s.muggleton, shishun.liang20, g.baldwin\}@imperial.ac.uk,
}
\begin{document}
\raggedbottom
\nolinenumbers
\maketitle

\begin{abstract}
Recent attention to relational knowledge bases has sparked a demand for understanding how relations change between entities. Petri nets can represent knowledge structure and dynamically simulate interactions between entities, and thus they are well suited for achieving this goal. However, logic programs struggle to deal with extensive Petri nets due to the limitations of high-level symbol manipulations. To address this challenge, we introduce a novel approach called Boolean Matrix Logic Programming (BMLP), utilising boolean matrices as an alternative computation mechanism for Prolog to evaluate logic programs. Within this framework, we propose two novel BMLP algorithms for simulating a class of Petri nets known as elementary nets. This is done by transforming elementary nets into logically equivalent datalog programs. We demonstrate empirically that BMLP algorithms can evaluate these programs 40 times faster than tabled B-Prolog, SWI-Prolog, XSB-Prolog and Clingo. Our work enables the efficient simulation of elementary nets using Prolog, expanding the scope of analysis, learning and verification of complex systems with logic programming techniques. 
\end{abstract}
\section{Introduction}
\label{sec:intro}
Relational knowledge bases as a form of knowledge representation have received great interest recently  \cite{nickel_review_2016,wang_knowledge_2017,hogan_knowledge_2021,ji_survey_2022}. Relational knowledge bases are structured repositories of knowledge that encapsulate information about entities and their relationships.  A number of them including ConceptNet \cite{speer_conceptnet_2017}, DBpedia \cite{lehmann_dbpedia_2015} and YAGO \cite{rebele_yago_2016} contain millions of entities and relations \cite{nickel_review_2016}. Current research on knowledge bases primarily assumes static relations and entity states \cite{ji_survey_2022}. Dynamic modelling of relations and entities has great importance since relational knowledge is continuously evolving. \cite{kazemi_representation_2020}. 

One class of graph models, called Petri nets, excels for this purpose. A Petri net is a weighted directed bipartite graph that contains two types of nodes. An example of a Petri net representing transit flight routes is presented in Figure \ref{fig:flight_routes}. The information flow between nodes in a Petri net is modelled by transitions of black tokens. This mechanism enables Petri nets to represent dynamic activities of systems in areas such as Computer Science and Biology \cite{reisig_understanding_2013,sahu_advances_2021}. Petri nets and logic programs can complement each other in knowledge representation and dynamic analysis \cite{domenici_petri_1990}. The example in Figure \ref{fig:flight_routes} relates to reachability analysis which is a fundamental problem in Petri nets \cite{murata_petri_1989}. However, the volume of Petri nets can present a significant challenge to logic programming.  For instance, in the OpenFlights database (Figure \ref{fig:flight_routes} \cite{openflights}), finding flight routes between cities may take substantial computation or may not terminate if cycles exist. 


\begin{figure}[t]
    \centering
        \includegraphics[width=\linewidth]{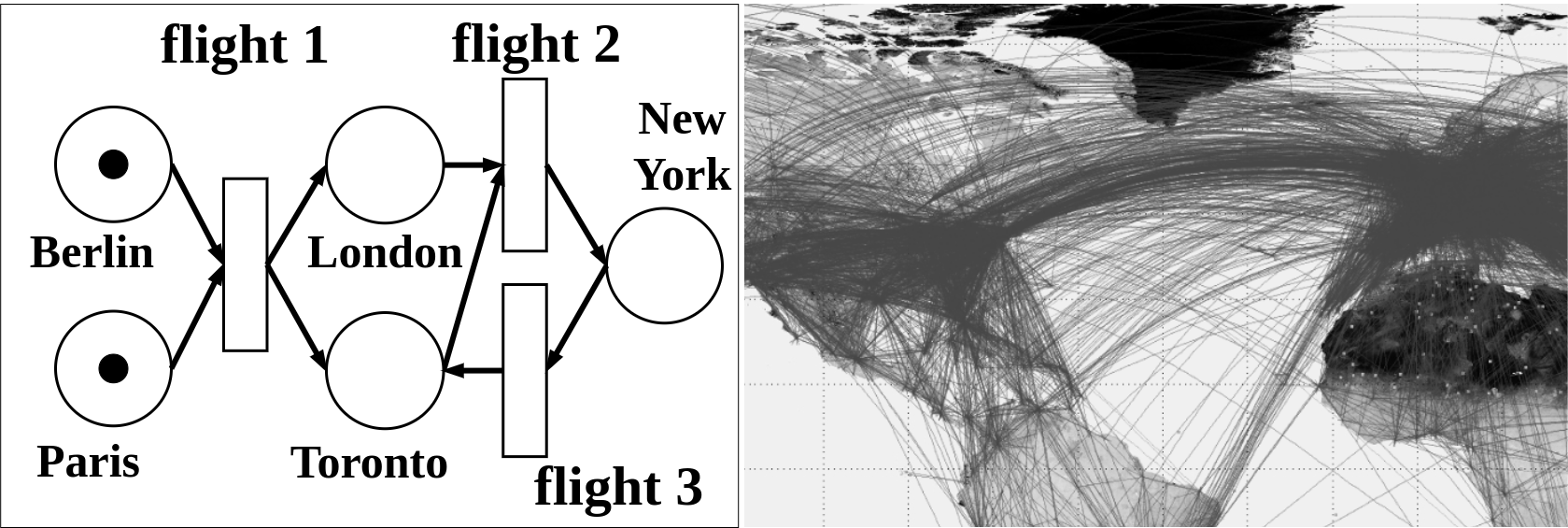} 
    \caption{Left: A Petri (elementary) net of transit flights between 5 cities with tokens on nodes ``Berlin'' and ``Paris''. Transit passengers can take different routes to their destinations. Passengers to New York from Berlin and Paris should arrive before the flights from London and Toronto go to New York. Right: Graphical view of relations in the OpenFlights database describing 67663 airplane flights between 3321 cities.}
    \vspace{-10pt}
    \label{fig:flight_routes}
\end{figure}

In this paper, we propose an approach called \textit{Boolean Matrix Logic Programming} (BMLP) to leverage the runtime efficiency advantages of low-level boolean matrices for computing logic programs. Our approach contrasts with traditional logical inference in AI, which has mostly been performed symbolically at a high level. We present two novel BMLP algorithms to analyse the reachability in a type of Petri nets called \textit{one-bounded elementary nets} (OENs). OENs have places that contain at most one token, allowing their dynamic behaviour to be expressible with boolean representations. We show OENs can be transformed into \textit{linear and immediately recursive} programs (such as program $\mathcal{P}_1$), characterised by a single recursive Horn clause. Experiments show the efficiency benefits of BMLP methods in simulating OENs with millions of nodes created from graphs and biological data.
\smallskip

\noindent \textbf{Novelty, impact and contributions:}
\begin{itemize}
    \item We define the Boolean Matrix Logic Programming (BMLP) problem. 
    \item We prove that computing reachability in an OEN is logically equivalent to evaluating a linear and immediately recursive program.
    \item We present a BMLP framework to evaluate linear and immediately recursive programs by compiling them into boolean matrices and computing the transitive closures.
    \item We introduce two algorithms, iterative extension (BMLP-IE) and repeated matrix squaring (BMLP-RMS) to compute the transitive closure of boolean matrices, with implementations in SWI-Prolog.
    \item Our empirical results show that BMLP-IE and BMLP-RMS can compute reachability in an OEN 40 times faster than recursive programs evaluated by Clingo \cite{gebser_clingo_2014} and Prolog systems with tabling (B-Prolog \cite{zhou_language_2012}, SWI-Prolog \cite{wielemaker:2011:tplp} and XSB-Prolog \cite{swift_xsb_2012}). 
\end{itemize}
\section{Related work}

\textbf{Logic programming}. Little attention has been paid to representing Petri nets with logic programs. The early work by Domenici~\shortcite{domenici_petri_1990} describes places and transitions of a Petri net in terms of n-ary ground facts. Srinivasan et al.~\shortcite{srinivasan_knowledge_2012} attempt to learn n-ary ground facts that represent Petri net transitions in an Inductive Logic Programming (ILP) \cite{ILP1991} system Aleph \cite{aleph}. Few studies have looked at encoding Petri nets by Answer Set Programming (ASP) \cite{gelfond_stable_1988}. Behrens and Dix~\shortcite{behrens_model_2007} use Petri nets for analysing interactions in multi-agent systems with ASP programs. Anwar et al.~\shortcite{anwar_encoding_2013} provide an ASP encoding of Petri nets with coloured tokens to model the behaviours of biological systems. This encoding is adapted by Dimopoulos et al.~\shortcite{dimopoulos_encoding_2020} for simulating reversible Petri nets that allow transitions to fire in both directions. These systems all use high-level symbolic representations of Petri nets as deductive knowledge bases. However, the runtime of deductive inference in large-scale knowledge bases limits the applicability of these systems.  

\textbf{Petri nets.} Petri nets have been used as abstract execution models of logic programs. The \textit{Predicate-Transition nets} (PrT-nets) \cite{genrich_predicatetransition_1981} is a class of Petri nets whose transitions are defined by logical implications and are annotated by the arguments in Horn clauses. The class of PrT-nets are logically equivalent to first-order Horn clauses \cite{murata_predicate_transition_1988}. Murata and Zhang~\shortcite{murata_predicate_transition_1988} combine PrT-nets and parallel processing to implement deductive forward proofs. Jeffrey et al.~\shortcite{jeffrey_high_level_1996} use incidence matrices compiled from PrT-nets transitions for graphical visualisation of first-order Horn clause evaluation. In contrast, our work transforms Petri nets into logic programs and evaluates these programs by boolean matrix operations. Using logic programs allows the same implementation language to be shared between Petri nets, analysis and the logic reasoning system \cite{domenici_petri_1990}. 

\textbf{Bottom-up datalog evaluations.} Obtaining the least Herbrand model of recursive datalog programs can be reduced to computing the transitive closure of boolean matrices \cite{peirce_collected_1932,copilowish_matrix_1948}. Fischer and Meyer~\shortcite{fischer_boolean_1971} study a logarithmic divide-and-conquer computation technique by viewing relational databases as graphs and show a significant computational improvement over the ``naive'' iteration of matrix operators. A similar approach is explored by Ioannidis~\shortcite{ioannidis_computation_1986} for computing the fixpoint of recursive Horn clauses. Muggleton~\shortcite{muggleton_hypothesizing_2023} employs this approach in an ILP system called DeepLog. It repeatedly computes square binary matrices to quickly choose optimal background knowledge for deriving target hypotheses. While none of these methods is developed for Petri nets, we refer to relevant results in this paper. In our framework, we encode elementary nets as datalog programs and use boolean matrices to accelerate bottom-up datalog evaluations. 

\textbf{Relational algebra.} Recent work primarily studies the mapping of logic programs to linear equations over tensor spaces. Lin~\shortcite{lin_satisability_2013} gives a linear algebra framework for resolution and SAT problems. Grefenstette~\shortcite{grefenstette_towards_2013} uses a tensors-based calculus to represent truth values of domain entities, logical relations and operators for first-order logic. Sato~\shortcite{sato_linear_2017} shows that computing algebraic solutions can approximate recursive relations in first-order Datalog. Based on this approach, Sato et al.~~\shortcite{sato_abducing_2018} establish a linear algebra abduction framework by encoding a subset of recursive datalog programs in tensor space. However, limitations are identified for solving certain recursive programs due to the difficulty of computing some arithmetic solutions whereas these can be solved by iterative bottom-up evaluations \cite{sato_linear_2017}. Our approach does not have this issue since it uses boolean matrices for iterative bottom-up computation.
\section{Preliminaries}
\subsection{Elementary nets}
\begin{figure}[t]
    \centering
        \includegraphics[width=0.65\linewidth]{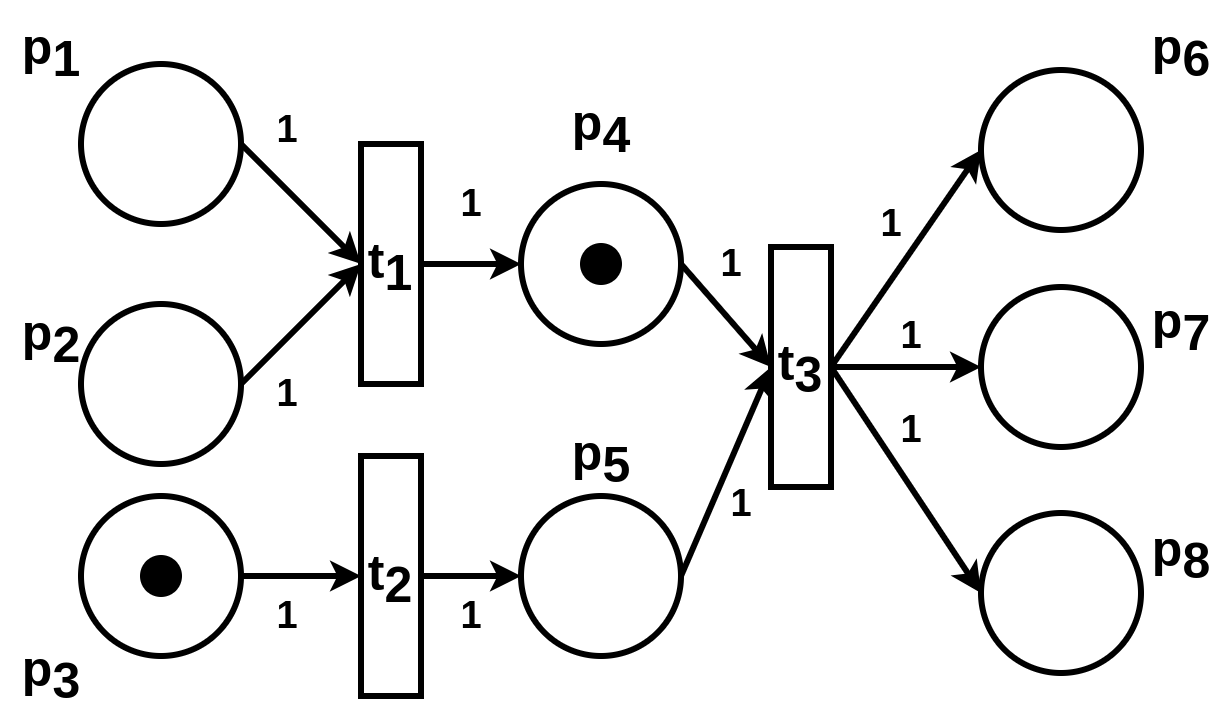} 
    \caption{An OEN. Places are ellipses. Transitions are rectangles. Arcs are marked with weights. A black dot is a token. The initial marking is $m(p_3) = m(p_4) = 1$ with a set representation $\{p_3, p_4\}$. Firing transition $t_2$ gives a new marking $\{p_4, p_5\}$.}
    \vspace{-10pt}
    \label{fig:petri_net}
\end{figure}
Elementary nets are a class of Petri nets \cite{rozenberg_elementary_1998}. A Petri net contains two sets of nodes: places $P$ and transitions $T$. The flow relation $F \subseteq (P \times T) \cup (T \times P)$ defines arcs between nodes. Places are marked by tokens. The weight of an arc $(p, t)$ or $(t, p)$ is conventionally written as $\overline{pt}$ or $\overline{tp}$. An elementary net is a tuple $(P, T, F)$ where $P \cap T = \emptyset$, $\overline{pt} = 1 \text{ if } (p,t) \in F$ and $\overline{tp} = 1 \text{ if } (t,p) \in F$. 

A one-bounded elementary net (OEN) has places which are marked with at most one token \cite{reisig_understanding_2013}. Figure \ref{fig:petri_net} shows an example of OEN with an initial marking of tokens. In this case, a marking (of tokens) $m$ can be represented by a boolean function $m: P \to \{0,1\}$ or a set alternatively. A marking is a binary value assignment in the sense that for a marking $m$, a place has a token if and only if $p \in m$ and $m(p) = 1$. A transition $t$ is enabled by $m$ if and only if for all arcs $(p,t)\in F$, $m(p) = 1$. A step $m \xrightarrow[]{t} m'$ is a fired transition resulting in a new marking according to: 
\begin{gather}
m'(p) = \left\{
\begin{aligned}
& m(p) - 1 \qquad\,\,\,\,\, \text{if } (p, t) \in F \\ 
& m(p) + 1 \qquad\,\,\,\,\, \text{if } (t, p) \in F \\
& m(p) \qquad\qquad\,\,\,\,\,\, \text{otherwise} \nonumber
\end{aligned}
\right.
\end{gather}
A marking $m'$ is reachable from a marking $m$ if and only a finite step sequence exists. A place is reachable from $m$ if and only if $m'$ is a reachable marking from $m$ and $m'(p) = 1$. A marking can be written as a vector ${[} 1, 0, 1, ... {]}^\intercal$ where each element is the binary value of the respective place. A transition can also be represented by a vector ${[} z_1, z_2, ..., z_k {]}^\intercal$ where $z_i$ changes the value of the output place. 



\subsection{Matrix algebra for datalog}

We first summarise the background on datalog programs \cite{LP_book}. A variable is a character string beginning with an uppercase letter. A predicate or constant symbol is a character string starting with a lowercase letter. A Horn clause is a disjunctive clause with at most one positive literal. A definite clause is a Horn clause with exactly one positive literal and a definite program is a set of definite clauses. A ground fact is a definite clause with no body literal and no variables. Datalog programs are definite programs without function symbols.  A first-order definite clause has the form, $A_0 \leftarrow A_1,...,A_n$. In this paper, we consider $A_i = r_i(x_1, ..., x_k)$ where $r_i$ is a predicate symbol and $x_j$ is a variable or constant symbol. 

The Herbrand base $\mathcal{B_P}$ is the set of all ground facts constructible from predicate and constant symbols in a first-order definite program $\mathcal{P}$. The Immediate Consequence Operator $\mathcal{T_P}$ \cite{van_emden_semantics_1976} is a mapping over subsets of $\mathcal{B_P}$ defined as:
\begin{flalign}
    & \forall I \subseteq \mathcal{B_P}, \mathcal{T_P} (I) = \{ \alpha \in \mathcal{B_P} \, | \, \alpha \gets B_1,..., B_m \, (m \geq 0) \nonumber \\
    & \text{ is a ground instance of a clause in } \mathcal{P} \text{ and } \{B_1,..., B_m\} \subseteq I\}. \nonumber  
\end{flalign}
$\mathcal{T_P}$'s least fixpoint is $\mathcal{T_P}{\uparrow^\omega}$ = $\cup_{n \geq 0}\mathcal{T_P}{\uparrow^n}$ where $\mathcal{T_P}{\uparrow^0} = \emptyset$ and $\mathcal{T_P}{\uparrow^n} = \mathcal{T_P}(\mathcal{T_P}{\uparrow^{n-1}})$. A clause entailed by $\mathcal{P}$ is written as $\mathcal{P} \models \alpha$. The minimal Herbrand model $M(\mathcal{P})$ contains all ground facts that are entailed by $\mathcal{P}$ and $M(\mathcal{P})$ = $\mathcal{T_P}{\uparrow^\omega}$. 

A recursive program has a body predicate that appears in the head of a clause. A recursive datalog program $\mathcal{P}$ can be compiled into matrices for bottom-up evaluation \cite{ceri_datalog_1989}. A linear and immediately recursive program $\mathcal{P}$ can be described by linear equations \cite{sato_linear_2017}:
\begin{flalign}
    &\textbf{R}_2^{0} = \textbf{0} \nonumber\\
    &\textbf{R}_2^{k} = \textbf{R}_1 + \textbf{R}_1\textbf{R}_2^{k-1}
    \label{eq:matrix_algebra}
\end{flalign}
where ground facts $r_1$ and a recursive clause $r_2$ can be represented as two $n \times n$ incidence matrices \textbf{R}$_1$, \textbf{R}$_2$ such that $(\textbf{R}_k)_{ij}$ = 1 if $\mathcal{P} \models r_k(c_i, c_j)$ for constant symbols $c_i, c_j$ and $(\textbf{R}_k)_{ij}$ = 0 otherwise. The transitive closure $\textbf{R}_2^*$ = $\textbf{R}_1 + \textbf{R}_1\textbf{R}_2^*$ has a solution $\textbf{R}_2^*$ = $\sum_{k=1}^{\infty}$\textbf{R}$_1^{k}$. Addition + between two boolean matrices is defined as (\textbf{A} + \textbf{B})$_{ij}$ = \textbf{A}$_{ij}$ $\lor$ \textbf{B}$_{ij}$.  Multiplication $\times$ is defined as $(\textbf{A} \times \textbf{B})_{ij}$ = $\bigvee_{k=1}^n$ \textbf{A}$_{ik}$ $\land$ \textbf{B}$_{kj}$ and can be abbreviated as (\textbf{A}\textbf{B}). When ``+'' and ``$\times$'' are boolean matrix operators, $\sum_{k=0}^{\infty}\textbf{A}^{k}$ can be efficiently computed by repeated matrix multiplications \cite{ioannidis_wong_1986}: 
\begin{flalign}
    \sum_{k=0}^{\infty}\textbf{A}^{k} = \prod_{k=0}^{\infty} (\textbf{I} + \textbf{A}^{2^k}) = (\textbf{I} + \textbf{A}) (\textbf{I} + \textbf{A}^2) (\textbf{I} + \textbf{A}^4) \, ... 
    \label{eq:repeated_squaring}
\end{flalign}
where $\textbf{I}$ is the identity. This logarithmic technique computes the transitive closure of $n \times n$ boolean matrices in $O(n^3 log_2 n)$ time complexity \cite{fischer_boolean_1971}.

\section{Transforming elementary nets}

We transform an OEN into linear and immediately recursive datalog. We map places into constant symbols and transitions into predicate symbols and add a recursive clause. A hypernode represents a set of place nodes and is created for every transition's input places. Hypernodes are denoted by new constant symbols that concatenate the member places' names. Given an OEN, the number of hypernodes we create is fixed and limited by the number of transitions.

\begin{definition} [\textbf{OEN transformation}]
    The transformation takes an OEN and an initial marking $m$. Let $t$ denote any transition. Input places of transitions are represented by hypernodes $p^*$. A datalog program $\mathcal{P}$ is written with predicate symbols $t$, $r_1$ and $r_2$ and variables X, Y and Z:
    \begin{itemize}
        \item write $t(p^*, q)$ if $t$ connects a hypernode $p^*$ and a place $q$
        \item write $t(p_1^*,p_2^*)$ if $t$ connects two hypernodes $p_1^*$ and $p_2^*$
        \item write $r_1(p,q) \gets t(p,q)$ for every $t$ clause written
        \item write $r_2(X,Y) \gets r_1(X, Y)$
        \item write $r_2(X,Y) \gets r_1(X, Z), r_2(Z, Y)$ 
        \item write $r1(m, p^*)$ if all members of hypernode $p^*$ are in the initial marking $m$  
         \nonumber
    \end{itemize}
    \label{def:transformation}
\end{definition}
\begin{example}
From the OEN in Figure \ref{fig:flight_routes}, transitions are represented by $flight\_1$, $flight\_2$ and $flight\_3$ clauses. We create two hypernodes $berlin\_paris$ and $london\_toronto$ for these transitions. These transitions connect places that are represented by constants $\{london$, $new\_york$, $toronto\}$. For an initial marking $m_1 = \{berlin, paris\}$, we can construct a datalog program:
\begin{gather}
\mathcal{P}_1:\left\{
\begin{aligned} 
    flight(m_1, berlin\_paris).&\\
    flight\_1(berlin\_paris, london\_toronto). &\\
    flight\_1(berlin\_paris, london). &\\
    flight\_1(berlin\_paris, toronto). &\\
    flight\_2(london\_toronto, new\_york). &\\
    flight\_3(new\_york, london). &\\
    flight(X,Y) \gets flight\_1(X,Y). &\\
    flight(X,Y) \gets flight\_2(X,Y). &\\
    flight(X,Y) \gets flight\_3(X,Y). &\\
    route(X,Y) \gets flight(X,Y). &\\ 
    route(X,Y) \gets flight(X,Z), route(Z,Y). & \nonumber
\end{aligned}
\right\}
\end{gather} 

\label{ex:transformation}
\end{example}
For the marking $m_2 = \{berlin\}$ where only Berlin has a token, no flight clause is added because $berlin$ is not a hypernode. We can use $\mathcal{P}_1$ to find reachable places (cities). 
\begin{example} (Continued).
    In Figure \ref{fig:flight_routes}, $new\_york$ and $toronto$ are reachable from the marking $m_1 = \{ berlin, paris \}$ via transitions $flight_1$ and $flight_2$ but is not reachable from the marking $m_2 = \{berlin\}$. We can find groundings of $route(m_1, Y)$ and $route(m_2, Y)$ in the Herbrand model $M(\mathcal{P}_1)$ which gives us $route(m_1, new\_york)$ and $route(m_1, toronto)$.
\end{example}

\begin{theorem}
    Let $\mathcal{P}$ be a datalog program transformed from an OEN = (P, T, F). For constant symbols $p_1$, $p_2$ in $\mathcal{P}$, $\mathcal{P} \models r(p_1,p_2)$ if and only if $p_2$ is reachable from $p_1$ in the OEN.
    \label{theorem:equivalence}
\end{theorem}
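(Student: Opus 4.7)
The plan is to prove the biconditional by a double induction: on the structure of derivations in $\mathcal{P}$ for one direction, and on the length of firing sequences in the OEN for the other. I would first set up a faithful correspondence between constant symbols of $\mathcal{P}$ and relevant portions of markings: every constant denotes either a place $p \in P$, a hypernode $p^*$ (abbreviating the set of input places of some transition, all simultaneously marked), or the initial marking $m$. Using Definition \ref{def:transformation}, I would show that $r_1(a,b) \in M(\mathcal{P})$ holds iff either (i) there is a transition $t$ whose input side is encoded by $a$ and whose output side is encoded by $b$, so that firing $t$ from any marking covering $a$ produces a marking covering $b$, or (ii) $a=m$ and $b=p^*$ with all places of $p^*$ marked in $m$, capturing the initial-enabling condition.

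For the ($\Rightarrow$) direction, I would induct on the least $n$ with $r_2(p_1,p_2) \in \mathcal{T_P}{\uparrow^n}$. The base case uses the non-recursive clause $r_2(X,Y) \gets r_1(X,Y)$: by the correspondence above, the single $r_1$ fact witnesses either the enabling of a transition from $m$ or a direct transition step, both yielding reachability of $p_2$ from $p_1$ in the OEN. The inductive step uses $r_2(X,Y) \gets r_1(X,Z), r_2(Z,Y)$: the $r_1(p_1,z)$ atom provides one firing step (or an initial enabling) reaching a marking covering $z$, and the inductive hypothesis on $r_2(z,p_2)$ supplies the remaining firing sequence, which we concatenate.

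For the ($\Leftarrow$) direction, I would induct on the length $k$ of a firing sequence $m_0 \xrightarrow{t_1} m_1 \xrightarrow{t_2} \cdots \xrightarrow{t_k} m_k$ with $p_1$ identifying $m_0$ (or a covered hypernode) and $p_2$ marked in $m_k$. Each step $m_{i-1} \xrightarrow{t_i} m_i$ enables $t_i$ on some hypernode $p^*$ covered in $m_{i-1}$, which by the transformation rules produces a ground $t_i$ fact and thus an $r_1$ fact between the representative constants of the input and output sides of $t_i$; the initial step is supported by the rule writing $r_1(m, p^*)$. Composing these $r_1$ facts through the recursive $r_2$ clause yields $\mathcal{P} \models r_2(p_1,p_2)$.

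The main obstacle I anticipate is the invariant connecting hypernodes (static sets of input places) to the evolving markings of the OEN along a firing sequence: after firing $t_i$, tokens leave the input places and appear on the output places, so one must argue that the hypernode $p^*$ required to enable the next transition $t_{i+1}$ is indeed covered in $m_i$. This is exactly where the one-bounded condition and the bipartite structure of elementary nets are doing the work, since they make each place-value binary and eliminate multiplicity concerns; I would package this into a lemma asserting that whenever $r_1(a,b)$ is derived then every marking covering $a$'s places admits a firing step to a marking covering $b$'s places, and chain it through the induction.
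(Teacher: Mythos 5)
Your proposal is correct in outline, but it takes a noticeably more elementary route than the paper. The paper's proof first builds a fixpoint semantics on the Petri-net side that mirrors the van Emden--Kowalski semantics of the program: it defines a ``step consequence'' operator $\mathcal{T}_F$ over the power set $\Omega = 2^P$ of places, shows $(\mathcal{T}_F,\subseteq)$ is a complete lattice, proves $\mathcal{T}_F$ monotonic and compact, and invokes Tarski's fixpoint theorem to identify the set of reachable places with $\mathcal{T}_F{\uparrow^\omega}=\cup_{n\geq 0}\mathcal{T}_F{\uparrow^n}$. The two implications of the theorem are then obtained by translating between membership in $\mathcal{T}_F{\uparrow^\omega}$ and membership in $\mathcal{T_P}{\uparrow^\omega}$. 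You instead dispense with the explicit operator and argue by a double induction --- on the least $n$ with $r_2(p_1,p_2)\in\mathcal{T_P}{\uparrow^n}$ for one direction and on the length of the firing sequence for the other --- which amounts to unfolding the iterates of $\mathcal{T}_F$ by hand. Your version is more self-contained and makes the step-by-step correspondence between $r_1$ facts and individual firings concrete; the paper's version buys a cleaner statement of ``logical equivalence'' (two least fixpoints coincide) and isolates the net-side reasoning into reusable lemmas (the paper's Lemma~\ref{lemma:step} plays exactly the role of your proposed invariant that any marking covering the input hypernode admits a firing step to a marking covering the outputs, relying on one-boundedness to keep place values binary). One point to make sure you discharge explicitly in your inductive step: the hypothesis you carry must be of the form ``from \emph{every} marking covering $z$ there is a firing sequence marking $p_2$,'' not merely ``from \emph{some} such marking,'' otherwise the concatenation of the single step witnessed by $r_1(p_1,z)$ with the sequence witnessed by $r_2(z,p_2)$ does not go through; you flag this obstacle yourself, and your proposed lemma in its universally quantified form is exactly what is needed.
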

\begin{proof}
    We provide a sketch proof. Since a marking maps places to binary values, we can consider a marking as a truth value assignment, e.g. $m(p) = 1$ means $m(p) = \top$ where $m$ is a marking and $p$ is a place. When a transition $t$ fires, a step leads to a marking that assigns new truth values to places. This allows us to define an operator $\mathcal{T}_F$ over the power set $\Omega = 2^P$ given the OEN = (P, T, F): 
    \begin{flalign}
        \forall m \in \Omega, \mathcal{T}_F(m) = \{ p_j \in P \, | \, m'(p_j) \gets \land_{(p_i, t) \in F} m(p_i) &\nonumber \\ 
        \text{ where} \, \{p_1, ..., p_n \} \subseteq m \text{ for } 1 \leq i \leq n, (t, p_j) \in F, m' \in \Omega \}& \nonumber
    \end{flalign}
    $(\mathcal{T}_F, \subseteq)$ is a complete lattice since $\Omega$ is a set partially ordered by the set inclusion $\subseteq$ where $\mathcal{T}_F{\uparrow^0} = \emptyset$ and $\mathcal{T}_F{\uparrow^n} = \mathcal{T}_F(\mathcal{T}_F{\uparrow^{n-1}})$. The top element is the set of all places $P$ and the bottom element is $\emptyset$.
    We can prove $\mathcal{T}_F$ is a monotonic and compact operator which allows us to show that $\mathcal{T}_F$ has a least fixpoint equal to $\mathcal{T}_F{\uparrow^\omega} = \cup_{n \geq 0}\mathcal{T}_F{\uparrow^n}$. We use this to prove the two implications of the theorem. If $p_2$ is reachable from $p_1$ in the OEN, two markings $m_0$ and $m_k$ exist in $\mathcal{T}_F{\uparrow^\omega}$ such that $m_0(p_1) = \top = m_k(p_2)$. $m_k$ is reachable from $m_0$ via a sequence of steps (fired transitions) which corresponds to ground facts in the transformed program. Therefore, we know that $r(p_1,p_2) \in \mathcal{T_P}{\uparrow^\omega}$ so $\mathcal{P} \models r(p_1,p_2)$. The other implication can be proven similarly.
\end{proof}

We defer the full proof of Theorem \ref{theorem:equivalence} to Appendix. Theorem \ref{theorem:equivalence} demonstrates that if we can derive a ground fact entailed by a transformed program then we know the relevant places in the OEN are reachable. Theorem \ref{theorem:equivalence} allows us to compute reachability in an OEN by evaluating the transformed program via boolean matrices in BMLP methods. 

\section{Boolean matrix logic programming}
\subsection{Problem setting}
In contrast to traditional logical inferences, a Boolean Matrix Logic Programming problem involves ``programming'' low-level boolean matrices to evaluate datalog programs.  
\begin{definition} [Boolean Matrix Logic Programming (BMLP) problem]
    Let $\mathcal{P}$ be a datalog program containing a set of clauses with predicate symbol $r$. The goal of Boolean Matrix Logic Programming (BMLP) is to find a boolean matrix $\textbf{R}$ encoded by a datalog program such that $(\textbf{R})_{ij}$ = 1 if $\mathcal{P} \models r(c_i, c_j)$ for constant symbols $c_i, c_j$ and $(\textbf{R})_{ij}$ = 0 otherwise.
\end{definition}
In this work, we focus on a subset of this problem, arity two linear and immediately recursive datalog programs with at most two body literals. We present a BMLP framework to compute the transitive closure of boolean matrices for evaluating this class of programs. 

\subsection{Framework}

\subsubsection{Boolean matrix compilation}
To compile a datalog program $\mathcal{P}$, we first create mappings between constant symbols and integer numbers, $cton$ (constant symbols to numbers) and $ntoc$ (numbers to constant symbols). The constant symbols are ordered from 0 and the i-th symbol $c_i$ is represented by $cton(i, c_i)$ and $ntoc(c_i, i)$. We compile a set of ground facts $\{v(0, b_0), v(1, b_1), ...\}$ ordered from 0 to describe a clause $r$ as a boolean matrix. Each $v(i, b_i)$ is the i-th row of the boolean matrix and $b_i$ is a binary code written as an integer such that the j-th bit of the binary code $(b_i)_j$ is 1 if $\mathcal{P} \models r(c_i, c_j)$ and 0 otherwise.
\begin{example}
We can compile a boolean matrix $\textbf{R}$ from $\mathcal{P}_3$:
\begin{gather}
\mathcal{P}_{3}:\left\{
\begin{aligned}
    flight(c_0, c_1). \,\, flight(c_1, c_2). &\\
    route(X,Y) \gets flight(X,Y). &\\ 
    route(X,Y) \gets flight(X,Z), route(Z,Y). &\nonumber
\end{aligned}
\right\}
\label{ex:recursive_program}
\end{gather}
\vspace{-5pt}
\begin{flalign*}
    \mathcal{P}_3 \models flight(c_0, c_1) \Longrightarrow \text{matrix row 0, } b_0 = 010 \Longrightarrow v(0, 2)&\\
    \mathcal{P}_3 \models flight(c_1, c_2) \Longrightarrow \text{matrix row 1, } b_1 = 100 \Longrightarrow v(1, 4)&\\
    \Longrightarrow \text{matrix row 2, } b_2 = 000 \Longrightarrow v(2, 0)&
\end{flalign*}
\vspace{-10pt}
\begin{gather}
\textbf{R}: \left\{
\begin{aligned}
    cton(0, c_0). \,\, ntoc(c_0, 0). \,\, cton(1, c_1). \,\,  ntoc(c_1, 1). \\
    cton(2, c_2). \,\, ntoc(c_2, 2). \,\, v(0, 2). \,\, v(1, 4). \,\, v(2, 0). \nonumber
\end{aligned}
\right\}
\end{gather}
\end{example}

We compile vectors as single-row matrices. When $c_i, c_j$ ($0 \leq i, j < n$) come from the same set of constant symbols, a boolean matrix created this way is a $n \times n$ square matrix. 

\subsubsection{Iterative extension (BMLP-IE)}

\begin{algorithm}[t]
    \caption{Iterative extension (BMLP-IE)}
    \label{alg:algorithm1}
    \textbf{Input}: A $1 \times n$ vector $\textbf{v}$, two $k \times n$ boolean matrices $\textbf{R}_{1}$, $\textbf{R}_{2}$ that encode ground facts.\\
    \textbf{Output}: Transitive closure $\textbf{v}^*.$
    \begin{algorithmic}[1] 
        \STATE Let $\textbf{v}^*$ = $\textbf{v}$, $\textbf{v}'$ = \textbf{0}.
        \WHILE{$True$}
            \FOR{$1 \leq i \leq k$} 
                \STATE the i-th bit of $\textbf{v}'$ is 1 if $\textbf{v}$ AND $(\textbf{R}_{1})_{i,*}$ == $(\textbf{R}_{1})_{i,*}$. 
            \ENDFOR
        \STATE $\textbf{v}^*$ = ($\textbf{v}'$ $\textbf{R}_{2}$)+ \textbf{v}.
        \STATE $\textbf{v} = \textbf{v}^*$ if $\textbf{v}^* \ne \textbf{v}$ else break.
        \ENDWHILE
    \end{algorithmic}
\end{algorithm}

The iterative extension algorithm (BMLP-IE) iteratively expands the set of derivable ground facts for a partially grounded clause $r(c, Y)$. Given a program with $n$ constant symbols and $k$ ground facts, we represent the collection of ground facts as a $1 \times n$ vector or a $n$-bit binary code $\textbf{v}$ such that $(\textbf{v})_i = 1$ if $r(c, c_i)$ holds for the i-th constant symbol $c_i$. Then, we number ground facts in the program and write two $k \times n$ matrices $\textbf{R}_1$ and $\textbf{R}_2$. $\textbf{R}_1$ and $\textbf{R}_2$ describe which constants appear first and second in the ground facts, e.g. $first(i, c_j)$ says that j-th constant symbol appears as the first argument in the i-th ground fact.

We implement Algorithm \ref{alg:algorithm1} to compute the transitive closure of $\textbf{v}$. We use the bitwise AND to locate rows in $\textbf{R}_1$ that contain all the 1 elements in $\textbf{v}$. This finds indices of other ground facts that share constant symbols with ground facts represented by $\textbf{v}$. These derivable ground facts are added to $\textbf{v}$ via boolean matrix multiplication and addition. We iterate this process until we find the transitive closure of $\textbf{v}$.

\begin{proposition}
    Given a $1 \times n$ vector $\textbf{v}$, two $k \times n$ boolean matrices $\textbf{R}_{1}$ and $\textbf{R}_{2}$, Algorithm \ref{alg:algorithm1} has a time complexity $O(k n^2)$ for computing the transitive closure $\textbf{v}^*$.
    \label{proposition:1}
\end{proposition}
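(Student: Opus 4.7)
The plan is to bound the cost of a single iteration of the while loop and then bound the number of iterations, and multiply the two.

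First I would analyze the per-iteration cost. Inside the loop body, the \textbf{for} loop runs $k$ times, and each iteration performs a bitwise AND followed by an equality test between two $n$-bit rows, which takes $O(n)$ time. Hence lines 3--5 cost $O(kn)$. Line 6 then performs one boolean matrix product of a $1\times k$ vector by a $k\times n$ matrix, which takes $O(kn)$, plus an $O(n)$ addition with $\textbf{v}$. Line 7 compares two $n$-bit vectors in $O(n)$. So a single iteration of the while loop costs $O(kn)$.

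Next I would bound the number of while-loop iterations. The key observation is that $\textbf{v}^*$ is monotonically non-decreasing bitwise: line 6 ORs the new contribution $\textbf{v}'\textbf{R}_2$ with the current $\textbf{v}$, so every 1-bit that was in $\textbf{v}$ is preserved in $\textbf{v}^*$. The loop breaks as soon as $\textbf{v}^* = \textbf{v}$; therefore every iteration that does not break must strictly increase the number of 1-bits in $\textbf{v}$ by at least one. Since $\textbf{v}$ is an $n$-bit vector, the total number of iterations is at most $n+1$ (including the final iteration that detects the fixpoint).

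Combining the two bounds yields a total running time of $O(n)\cdot O(kn) = O(kn^2)$, which is the claimed complexity.

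The main conceptual step is the monotonicity argument that limits the iteration count to $O(n)$; the per-iteration cost analysis is a routine accounting over the boolean row operations and the boolean matrix-vector product, where I would implicitly use that the cost of bitwise AND, OR and equality on $n$-bit codes is $O(n)$ and that the $1 \times k$ by $k \times n$ boolean multiplication is $O(kn)$ under the same word model used elsewhere in the paper.
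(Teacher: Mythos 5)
Your proof is correct and follows essentially the same route as the paper's: bound the per-iteration cost at $O(kn)$ (dominated by the row-wise AND tests and the $1\times k$ by $k\times n$ boolean product) and bound the iteration count at $O(n)$ via the observation that each non-terminating pass must add at least one new bit to $\textbf{v}$. Your version is slightly more explicit about the cost of the inner \textbf{for} loop and the monotonicity of $\textbf{v}$, but the decomposition and the key counting argument are the same.
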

\begin{proof}
    The ``while'' loop in Algorithm \ref{alg:algorithm1} runs with $O(k \times n)$ bitwise operations due to multiplications between a vector and a $k \times n$ matrix. Until we find the transitive closure, at least one ground fact needs to be added at each iteration. Therefore, there are at most $n$ iterations which require $O(k \times n^2)$ bitwise operations. 
\end{proof}

\subsubsection{Repeated matrix squaring (BMLP-RMS)}
A linear and immediately recursive program, such as program $\mathcal{P}_3$ in Example \ref{ex:recursive_program}, can be represented by Equation (\ref{eq:matrix_algebra}). For $n$ constant symbols and ground facts with a predicate symbol $r$, we compile a $n \times n$ matrix $\textbf{R}_1$ such that $(\textbf{R}_1)_{ij}$ = 1 if $\mathcal{P} \models r(p_i, p_j)$ and otherwise $(\textbf{R}_1)_{ij}$ = 0. We can find the least Herbrand model for the recursive program by computing $\sum_{k=1}^{\infty}$\textbf{R}$_1^{k}$ based on Equation (\ref{eq:repeated_squaring}), which has an alternative form \cite{ioannidis_computation_1986}:
\begin{flalign}
    \sum_{k=0}^{\infty}\textbf{R}_1^{k} 
    = \lim_{k\to\infty} (\textbf{I} + \textbf{R}_1)^{k} 
    = (\textbf{I} + \textbf{R}_1) (\textbf{I} + \textbf{R}_1) ...
    \label{eq:simple_recursion}
\end{flalign}

Equation (\ref{eq:simple_recursion}) performs boolean matrix operations on the same elements in the matrix $(\textbf{I} + \textbf{R}_1)$. To avoid this, we take a similar approach to the logarithmic technique in Equation (\ref{eq:repeated_squaring}) to skip computations by repeatedly squaring matrix products.
\begin{flalign}
    \sum_{k=0}^{\infty}\textbf{R}_1^{k}
    &=(\textbf{I} + \textbf{R}_1) (\textbf{I} + \textbf{R}_1) (\textbf{I} + \textbf{R}_1)^2 (\textbf{I} + \textbf{R}_1)^4 ... \nonumber\\
    &=\prod_{i=1}^{\infty}(\textbf{I} + \textbf{R}_1)^{2^k}
    \label{eq:repeated_squaring2}
\end{flalign}
We re-arrange Equation (\ref{eq:simple_recursion}) as Equation (\ref{eq:repeated_squaring2}). At iteration $i$, we can compute $(\textbf{I} + \textbf{R}_1)^{2^i}$ by squaring the matrix product from the previous iteration until we reach the transitive closure. We implemented this process in Algorithm \ref{alg:algorithm2}. 

\begin{proposition} 
    Given a $n \times n$ boolean matrix $\textbf{R}_{1}$, Algorithm \ref{alg:algorithm2} has a time complexity $O(n^3 log_2 n)$ for computing the transitive closure $\textbf{R}_{2}^*$.
    \label{proposition:2}
\end{proposition}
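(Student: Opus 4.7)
The plan is to bound the cost of Algorithm 2 by separately bounding (i) the cost of one iteration and (ii) the number of iterations needed until the transitive closure is obtained, then taking the product. By Equation~(\ref{eq:repeated_squaring2}), iteration $i$ maintains the partial product $\textbf{M}_i = \prod_{j=0}^{i}(\textbf{I} + \textbf{R}_1)^{2^j}$, which can be updated by a single boolean matrix squaring together with a multiplication by the previous accumulator; the loop terminates when the accumulator stops changing, i.e.\ when $\textbf{M}_i = \textbf{M}_{i-1}$.

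First I would analyse the per-iteration cost. A boolean matrix product of two $n \times n$ matrices, under the $(\lor, \land)$ semiring operations defined in the preliminaries, requires $O(n^3)$ bitwise operations since each of the $n^2$ output entries is obtained by OR-ing $n$ pairwise ANDs. Each iteration performs a constant number of such products (one square and one multiplication into the running product), so one iteration of the loop costs $O(n^3)$.

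Next I would bound the iteration count by $\lceil \log_2 n \rceil$. The key observation, standard for reachability matrices, is that $\bigl((\textbf{I} + \textbf{R}_1)^k\bigr)_{ij} = 1$ iff there is a directed path of length at most $k$ from $i$ to $j$ in the graph encoded by $\textbf{R}_1$. Since a graph on $n$ vertices has diameter at most $n-1$, once $k \geq n$ the matrix $(\textbf{I} + \textbf{R}_1)^k$ equals the transitive closure $\textbf{R}_2^*$. After the $m$-th squaring iteration we have accumulated $(\textbf{I} + \textbf{R}_1)^{2^{m+1}-1}$ by a telescoping product, so taking $m = \lceil \log_2 n \rceil$ suffices for $2^{m+1}-1 \geq n$, at which point the accumulator becomes idempotent and the termination test in Algorithm 2 fires.

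Combining, Algorithm 2 performs at most $O(\log_2 n)$ iterations at cost $O(n^3)$ each, giving the claimed $O(n^3 \log_2 n)$ bound, matching the classical Fischer--Meyer analysis cited for Equation~(\ref{eq:repeated_squaring}). The main technical step to get right is the correspondence between powers of $\textbf{I} + \textbf{R}_1$ and bounded-length path counts, and in particular the telescoping identity that lets $m$ squarings cover paths of length up to $2^{m+1}-1$; the rest is routine bookkeeping of boolean matrix costs.
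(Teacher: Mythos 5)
Your proposal is correct and follows essentially the same route as the paper: $O(n^3)$ bitwise operations per boolean matrix product times $O(\log_2 n)$ squaring iterations, with the iteration bound justified (as the paper does only by citation to Fischer--Meyer) via the correspondence between powers of $\textbf{I}+\textbf{R}_1$ and bounded-length paths. One minor inaccuracy: Algorithm 2 performs only a single squaring per iteration ($\textbf{R}_2^* = \textbf{R}^2$) with no separate accumulator multiplication, since in the boolean semiring $(\textbf{I}+\textbf{R}_1)^{2^m}$ already equals $\sum_{k=0}^{2^m}\textbf{R}_1^k$ -- but this changes neither the per-iteration cost nor the bound.
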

\begin{proof}
    The time complexity of native multiplications between two $n \times n$ matrices is $O(n^3)$. Finding the closure requires $O(n)$ boolean matrix multiplications \cite{fischer_boolean_1971}. Since Algorithm \ref{alg:algorithm2} takes steps to the power of two, this number is reduced to $O(log_2 n)$, giving an overall $O(n^3 log_2 n)$ bitwise operations.
\end{proof}

The time complexity of matrix multiplication could be theoretically improved to $O(n^{2.376})$ \cite{coppersmith_matrix_1990}. Since we operate at the bit level, optimisations may not have a significant return in actual runtime. This paper does not investigate this tradeoff.

\begin{algorithm}[t]
    \caption{Repeated matrix squaring (BMLP-RMS)}
    \label{alg:algorithm2}
    \textbf{Input}: A $n \times n$ boolean matrix $\textbf{R}_1$ that encodes ground facts.\\
    \textbf{Output}: Transitive closure $\textbf{R}_2^*.$
    \begin{algorithmic}[1]
        \STATE Let $\textbf{R} = \textbf{R}_2^* = \textbf{I} + \textbf{R}_1$.
        \WHILE{$True$}
        \STATE $\textbf{R}_2^* = \textbf{R}^2$.
        \STATE $\textbf{R} = \textbf{R}_2^*$ if $\textbf{R}_2^* \ne \textbf{R}$ else break.
        \ENDWHILE
    \end{algorithmic}
\end{algorithm}


\subsection{Computing reachability}
BMLP-IE and BMLP-RMS are reminiscent of incidence matrix methods for computing graph reachability in \cite{fischer_boolean_1971,ioannidis_computation_1986}. BMLP-IE traverses a graph from a starting node and BMLP-RMS finds the connectivity between all nodes. Therefore, we devise these algorithms for two different reachability tasks in OENs. We implement BMLP-IE and BMLP-RMS in SWI-Prolog (version 9.2).\smallskip

\noindent\textbf{Reachable places from a specific marking.} We use BMLP-IE (Algorithm \ref{alg:algorithm1}) to return the transitive closure of reachable places from an initial marking (a partially grounded clause). We compile a vector $\textbf{v}$ to represent the initial marking (a set of ground facts) and two incidence matrices to describe the input and output places of transitions (constants that appear as the first and second argument of ground facts). The goal is to compute the transitive closure $\textbf{v}^*$.\smallskip

\noindent\textbf{Reachable places from all markings.} BMLP-RMS (Algorithm \ref{alg:algorithm2}) finds the union of all reachable places (the least model of the transformed program). We compile an incidence matrix $\textbf{R}_1$ by considering transitions as edges. BMLP-RMS returns a square boolean matrix $\textbf{R}_2^*$ representing places reachable from every initial marking.
\section{Experiments}
This section presents experimental results from OENs in two domains, graph data and a biology task. For Prolog systems, tabled evaluations ensure termination and faster grounding re-evaluations. Since our BMLP methods are implemented in SWI-Prolog, we compare against SWI-Prolog (version 9.2) with tabling. We also compare with state-of-the-art Prolog systems with tabling implementations, B-Prolog (version 8.1) and XSB-Prolog (version 5.0). We also test against Clingo (version 5.6), a state-of-the-art ASP system for finding stable models \cite{gelfond_stable_1988}, as an alternative approach to Prolog. By transforming OENs into datalog programs, we examine if applying BMLP on these datalog programs is faster than evaluating these programs by Clingo, B-Prolog, SWI-Prolog and XSB-Prolog with tabling. We compute each method's mean CPU time (excluding compilation time) in seconds and report standard deviations in Appendix.  

Our experiments\footnote{All experiments have been performed on a single thread with Intel(R) Core(TM) i7-8665U CPU @ 1.90GHz on a laptop. This underlines the efficiency of the BMLP methods.} answer the following questions:\smallskip

\noindent \textbf{Q1}: Can BMLP reduce the runtime in computing reachability in one-bounded elementary nets?\smallskip

BMLP-IE and BMLP-RMS are devised for two different tasks. In Section \ref{sec:exp_1}, we focus on evaluating BMLP-IE's runtime performance in identifying reachable places from a specific marking against ASP and Prolog systems. Then, BMLP-RMS is compared with ASP and Prolog systems for computing reachable places from all markings.\smallskip

\noindent \textbf{Q2}: How well do BMLP methods scale with large Petri nets?\smallskip

Let $k$ denote the number of ground facts and $n$ be the number of constant symbols in a datalog program. Proposition \ref{proposition:1} and \ref{proposition:2} identify two factors affecting the time complexity of our BMLP methods: the number of transitions ($k$) and places ($n$). BMLP-IE's time complexity is affected by $k$ and $n$. The time complexity of BMLP-RMS is just influenced by $n$. We examine how these factors empirically impact BMLP methods' runtime. \smallskip

\noindent \textbf{Q3}: Is BMLP applicable to a real-world problem? \smallskip

Petri nets play a vital role in understanding the interactions between the components of a biological system \cite{sahu_advances_2021}. In Section \ref{sec:exp_2}, we examine a type of biological model called genome-scale metabolic network models. Genome-scale metabolic models encapsulate known biochemical reaction information within a microorganism. BMLP-IE is more suitable for identifying the interactions between reactions given different combinations of compounds. We evaluate its runtime performance against ASP and Prolog systems for an OEN created from a genome-scale metabolic network.\smallskip

\subsection{Airplane flight routes}
\label{sec:exp_1}

\noindent \textbf{Computing OEN reachability:} Reachability in airplane flight route database is a typical transitive closure relationship. We create artificial OENs describing randomly generated airplane flight routes and represent them as linear and immediately recursive programs. Transitions in OENs are randomly sampled from pairs of places with a probability $p_t$. We then transform the OENs into similar programs like $\mathcal{P}_2$ in Example \ref{ex:transformation}.
We map the place and transition names into constant symbols. Ground facts are generated from pairs of constant symbols and are compiled into boolean matrices.\smallskip

\noindent \textbf{Methods:} The runtime of BMLP-IE and BMLP-RMS will be examined in \textbf{sub-tasks 1} and \textbf{2}. Experiments are repeated 10 times. Runtimes are capped at 300 seconds. 
\begin{enumerate}
    \item \textbf{Reachable places from one city.} Given a city $c$, we use B-Prolog, SWI-Prolog, XSB-Prolog and Clingo to enumerate groundings of $route(c, Y)$. 
    BMLP-IE compiles ground facts into a vector and computes the transitive closure of the vector. We randomly sample OEN transition with probability $p_t \in$ \{1, 0.5, 0.1, 0.01, 0.001, 0.0001\}. This results in programs with different numbers of ground facts. 
    \item \textbf{Reachable places from all cities.} We use B-Prolog, SWI-Prolog, XSB-Prolog and Clingo to find the least model of the datalog program. BMLP-RMS compile ground facts into a square boolean matrix and computes its transitive closure. We vary the number of places $n \in$ \{1000, 2000, 3000, 4000, 5000\}. This changes the size of matrices, up to 5000 $\times$ 5000. 
\end{enumerate}
 \smallskip

\begin{figure}[t]
    \centering
        \includegraphics[width=0.8\linewidth]{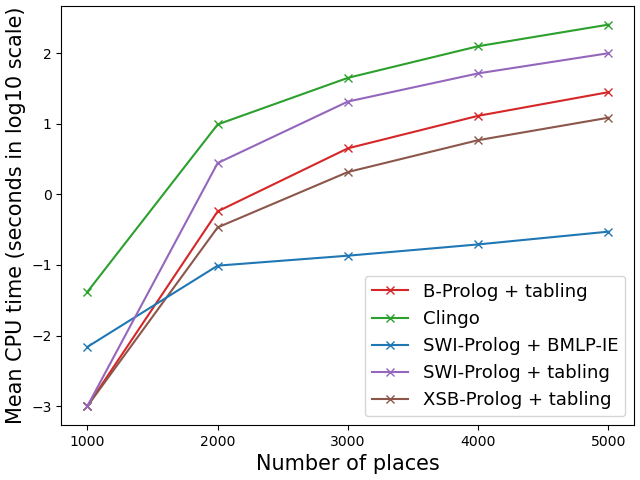} 
        \vspace{-10pt}
    \caption{Runtime when the probability $p_t$ of transitions (ground facts) is 0.001 but the No. places (constant symbols) $n$ varies. }
    \label{fig:exp_1_runtime_2}
    \vspace{-5pt}
\end{figure}

\begin{figure}[t]
    \centering
        \includegraphics[width=0.8\linewidth]{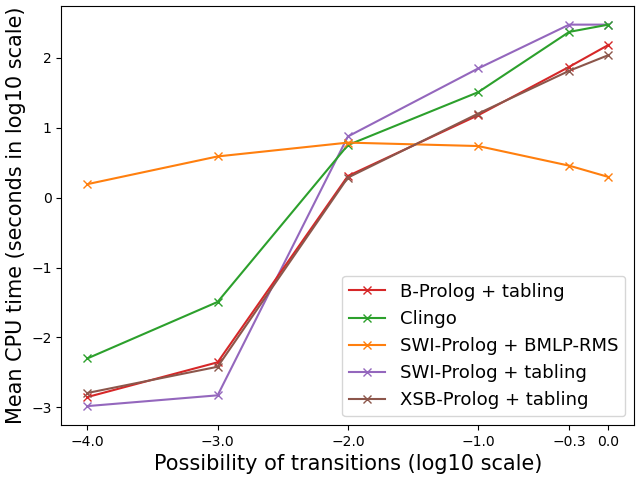} 
        \vspace{-10pt}
    \caption{Runtime when No. places (constant symbols) $n$ = 1000 and the probability $p_t$ of transitions (ground facts) varies. }
    \label{fig:exp_1_runtime}
    \vspace{-5pt}
\end{figure}

\noindent \textbf{Results:} Runtimes are shown in base-10 logarithms for better contrasts. Figure \ref{fig:exp_1_runtime_2} shows runtimes for \textbf{sub-task 1}. BMLP-IE (blue curve) is 40 times faster than XSB-Prolog and 800 times faster than Clingo when the number of places (constant symbols) $n = 5000$. For small probability $p_t = 0.001$, the number of transitions (ground facts) is close to the number of places, bounded by $O(n)$. When the probability $p_t$ is small, Proposition \ref{proposition:1} says that BMLP-IE performs close to $O(n^3)$ bitwise operations. This is shown by BMLP-IE's slower runtime growth over an increasing number of places. 

Figure \ref{fig:exp_1_runtime} shows runtimes for \textbf{sub-task 2}. BMLP-RMS's runtimes (orange curve) have a clear upper bound. BMLP-RMS runtime does not increase significantly, which supports Proposition \ref{proposition:2} since its time complexity is not affected by the number of transitions (ground facts). While Clingo and SWI-Prolog time out when $p_t = 1$, BMLP-RMS is 50 times faster than the second fastest XSB-Prolog. Notably, BMLP-RMS's runtime even reduces when $p_t > 0.01$. An explanation is that in denser elementary nets, the shortest paths between places have lengths closer to 1, therefore fewer squaring operations are needed to reach transitive closure. 

Writing boolean matrix outputs as Prolog programs creates overhead. This overhead is more significant for smaller problems, reflected by BMLP-IE and BMLP-RMS's slower runtime than the other methods when the number of places is 1000 in Figure \ref{fig:exp_1_runtime_2} and \ref{fig:exp_1_runtime}. While BMLP-IE and BMLP-RMS are used for different tasks, results also show their complementary nature. BMLP-IE has a faster runtime when the OEN is sparse (the probability of transitions or ground facts is low), and BMLP-RMS is suitable for OENs that are highly connected (the probability of transitions or ground facts is high). 

\subsection{Metabolic network models in biology}
\label{sec:exp_2}

\noindent \textbf{Computing producible substrates:} Petri nets are commonly used formalism for representing complex biological systems such as genome-scale metabolic network models and diagnosing their dynamic behaviours \cite{sahu_advances_2021}. A genome-scale metabolic network model contains biochemical reactions of chemical substrates $x_i$ and $y_j$: 
\begin{flalign}
    \text{Irreversible: } x_1,x_2,...,x_m &\longrightarrow y_1,y_2,...,y_n \nonumber\\
    \text{Reversible: } x_1,x_2,...,x_m &\longleftrightarrow y_1,y_2,...,y_n \nonumber
\end{flalign}
Reversible reactions are treated as two reactions in opposite directions. A genome-scale metabolic network is an OEN in the sense that a reaction is a transition, and reactant and product substrates are places. The behaviours of metabolic network models vary according to accessible nutrients. Therefore, we aim to find producible substrates from the reactions by defining nutrients as initial markings. We use the most comprehensive genome-scale metabolic network model to date \cite{iML1515}, iML1515, which comprises 2719 metabolic reactions and 1877 metabolites. \smallskip 

\noindent \textbf{Method:} To transform the metabolic network model into datalog, we create ground facts and constant symbols by concatenating substrates on the left-hand and right-hand sides of reactions. A marking is introduced as a set of pseudo-reactions. For example, a reaction and a marking become:
\begin{flalign}
    2 H_2 + O_2 \longrightarrow 2 H_2 O &\Longrightarrow reaction(h2\_o2, h2o).\nonumber\\
    m_1: \{ H_2, O_2, H_2 O\} &\Longrightarrow reaction(m_0,h2\_o2\_h2o). \nonumber\\
    &\Longrightarrow reaction(h2\_o2\_h2o, h2\_o2).\nonumber
\end{flalign}
Reachable markings are sets of substrates that can be produced via metabolic reactions. The transitive closure of metabolic paths from an initial marking of substrates can be represented by the following program. 
\begin{gather}
\mathcal{P}_3:\left\{
\begin{aligned}
    metabolic\_path(X,Y) \gets reaction(X,Y).& \nonumber\\ 
    metabolic\_path(X,Y) \gets reaction(X,Z),&  \nonumber\\
    metabolic\_path(Z,Y).& \nonumber
\end{aligned}
\right\}
\end{gather}
We test different initial markings by randomly sampling 1, 10, 100, and 1000 nutrients from all substrates. The number of transitions (ground facts) is around 10000 and the number of places (constant symbols) is 4500, so the probability of transitions is very low ($p_t = \frac{10000}{4500^2} = 0.0005$). We compare BMLP-IE with B-Prolog, SWI-Prolog, XSB-Prolog and Clingo. We compute the mean CPU time (without compilation time) across 100 repeats.\smallskip

\noindent \textbf{Results: } We report runtimes in base-10 logarithms for clearer comparisons. In Figure \ref{fig:exp_1_runtime_4}, we observe that BMLP-IE (blue curve) has a higher mean runtime compared with tabled XSB-Prolog, SWI-Prolog and B-Prolog when the number of substrates is less than 100. This is due to the boolean matrix writing overhead. However, when the number of substrates exceeds 100, BMLP-IE is 10 times faster than the second fastest XSB-Prolog and 600 times faster than Clingo. When the initial marking contains more substrates, more reactions can fire. This will significantly increase the problem complexity for B-Prolog, SWI-Prolog and XSB-Prolog since they need to recursively evaluate longer metabolic paths. Clingo also needs to search for large stable models which increases runtime. BMLP-IE's slow increase in runtime aligns with Proposition \ref{proposition:1} again as low $p_t$ can reduce its time complexity close to $O(n^3)$ bitwise operations. Since BMLP-IE can expand multiple metabolic paths simultaneously based on boolean matrices, the low saturation of transition means that it can find the transitive closure in a few iterations. 

\begin{figure}[t]
    \centering
        \includegraphics[width=0.8\linewidth]{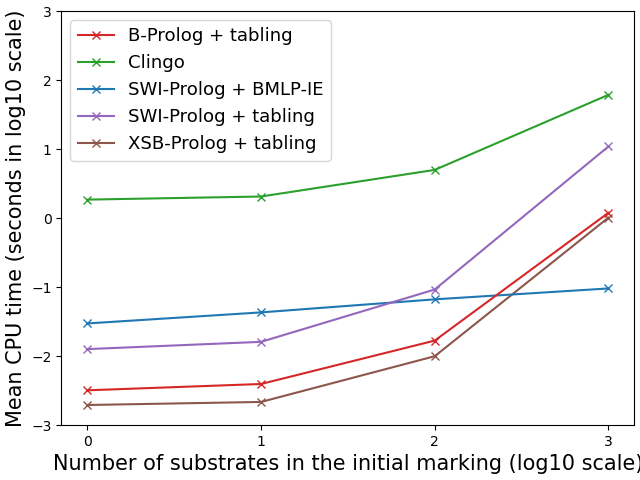} 
    \caption{Runtime for computing producible substrates from reactions in the genome-scale metabolic network model iML1515. }
    \label{fig:exp_1_runtime_4}
    \vspace{-5pt}
\end{figure}
\section{Conclusion and future work}
Petri nets complement knowledge representation with dynamic analysis. However, the simulation of Petri nets in logic programming has been much less explored. A significant challenge is scaling to large Petri nets. We address this by introducing Boolean Matrix Logic Programming (BMLP), which utilises low-level boolean matrices encoded in Prolog to compute datalog programs. In our BMLP framework, we implement two algorithms to find the transitive closure of boolean matrices for evaluating recursive programs. This framework is applied to compute reachability in OENs from two domains, knowledge graphs and a metabolic network model. Our theoretical and empirical results show that BMLP methods are especially suited for extensive OENs. BMLP-IE and BMLP-RMS can provide a 40 times faster runtime than state-of-the-art ASP solver and Prolog systems with tabling. Additionally, BMLP can be augmented with parallel processing to enhance the performance of matrix operations.\smallskip

\noindent \textbf{Limitations and future work:} In this work, only a subset of recursive programs is considered. We aim to extend the class of programs we can evaluate with BMLP to leverage the extensive literature on models such as Petri nets. In addition, we assume that datasets are noise-free. A natural link exists between Petri nets and Probabilistic Logic Programming \cite{de_raedt_probabilistic_2015}. Transitions sometimes have firing likelihood constraints and the uncertainty can be modelled by probabilistic logic programs. Evaluating these programs requires a tensor-based algebra framework.

\newpage
\section*{Ethical Statement}
Datasets used in empirical studies do not contain any personal information or other sensitive content. There are no ethical issues.

\section*{Acknowledgments}
The first, third and fourth authors acknowledge support from the UKRI 21EBTA: EB-AI Consortium for Bioengineered Cells \& Systems (AI-4-EB) award (BB/W013770/1). The second author acknowledges support from the UK’s EPSRC Human-Like Computing Network (EP/R022291/1), for which he acts as Principal Investigator.

\bibliographystyle{named}
\bibliography{bmlp}

\appendix
\renewcommand{\thesection}{\Alph{section}}
\section*{Appendix}
The appendix includes the following sections:
\begin{itemize}
    \item Section \ref{sec:appendix_proofs} provides the proof for \textbf{Theorem 1} which demonstrates the correctness of OEN transformation into datalog for reachability analysis. 
    \item Section \ref{sec:appendix_matrices} show examples of boolean matrix compilation in experiments from Section 6 (main paper).
    \item Section \ref{sec:appendix_exp} contains examples of experimental material from Section 6 (main paper). We aim to open-source all of our experimental data and implementations for reproducibility. 
    \item Section \ref{sec:appendix_results} present the mean and standard deviation of runtimes in experiments on iterative extension (BMLP-IE), repeated matrix squaring (BMLP-RMS), B-Prolog \cite{zhou_language_2012}, SWI-Prolog \cite{wielemaker:2011:tplp}, XSB-Prolog \cite{swift_xsb_2012} and Clingo \cite{gebser_clingo_2014} from Section 6 (main paper). 
\end{itemize}
\label{sec:appendix_intro}
\section{Correctness of OEN transformation}
\label{sec:appendix_proofs}
In this section, we prove Theorem \ref{theorem:appendix_equivalence}, which demonstrates the correctness of computing reachable places in a one-bounded elementary net (OEN) by evaluating a linear and immediately recursive program transformed from the OEN. In the main paper, we present a \textit{Boolean Matrix Logic Programming} (BMLP) framework for evaluating linear and immediately recursive programs. Theorem \ref{theorem:appendix_equivalence} provides a theoretical basis for using our BMLP framework.

We refer readers to Definition 1 in the main paper. The transformation in Definition 1 maps an OEN into a linear and immediately recursive program by writing places as constant symbols and transitions as predicate symbols. A recursive clause is added to the program. The transformed program is linear and immediately recursive since it contains only one recursive clause and the recursive predicate appears only once in the head of this clause. 

To prove Theorem \ref{theorem:appendix_equivalence}, we first show the effect of a step (fired transition) on updating markings. Since a marking is a binary function, we can map its range to $\{\bot,\top\}$.  A step in an OEN changes the current marking and updates the truth values assigned to places. 

\begin{lemma}
    Let $OEN = (P,T,F)$. For a transition $t \in T$, marking $m$ and $m'$, $\land_{(p_i, t)\in F} m(p_i) \to \land_{(t, p_j) \in F} m'(p_j)$.
    \label{lemma:step}
\end{lemma}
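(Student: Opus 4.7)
The plan is to prove the implication directly by assuming its antecedent and tracing through the step update rule. If $\land_{(p_i, t)\in F} m(p_i)$ is false, the implication is vacuously true, so suppose every input place $p_i$ of $t$ satisfies $m(p_i) = \top$ (equivalently $m(p_i) = 1$). By the definition of enabling given in the Preliminaries, this is exactly the condition that transition $t$ is enabled by $m$, so the step $m \xrightarrow{t} m'$ is well-defined and the update formula for $m'$ applies.

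Next, I would unfold the piecewise definition of $m'$ on an arbitrary output place $p_j$ with $(t, p_j) \in F$. The relevant clause gives $m'(p_j) = m(p_j) + 1$. To interpret this as the truth value $\top$, I invoke the one-bounded hypothesis: because the net is an OEN, every reachable marking assigns at most one token to each place, so for the step to yield a valid marking we must have $m(p_j) = 0$ whenever $(t, p_j) \in F$ and $p_j$ is not also an input of $t$. In the latter degenerate case where $(p_j, t) \in F$ as well, the two piecewise clauses combine to leave $m'(p_j) = 1$ (the token removed by the input arc is replaced by the output arc), so the value is again $\top$. Hence $m'(p_j) = 1 = \top$ in every case, and taking the conjunction over all output arcs yields $\land_{(t, p_j) \in F} m'(p_j) = \top$, completing the implication.

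The main obstacle I anticipate is the bookkeeping around places that are simultaneously input and output of $t$ (self-loops or overlapping in/out sets), where the naive subtraction-then-addition argument must be handled carefully so that $m'(p_j)$ is still a legal boolean value; this is where the one-boundedness assumption is actually used, and I would either treat it as a case split as sketched above or appeal to a contact-free style convention that these two arc sets are disjoint. Aside from this, the lemma is essentially a direct rewriting of the step rule in propositional form and should fit in a short paragraph once the case analysis is stated.
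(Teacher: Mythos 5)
Your proposal is correct and follows essentially the same route as the paper's proof: assume the antecedent, observe that $t$ is thereby enabled, apply the step update rule to each output place, and invoke one-boundedness to conclude $m'(p_j) = 1 = \top$. Your explicit case split for places that are simultaneously input and output of $t$ is actually more careful than the paper's own argument, which simply asserts that $m'(p_j) = m(p_j) + 1$ is ``maximally 1'' without addressing that bookkeeping.
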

\begin{proof}
    For an arbitrary $t \in T$, assume $\land_{(p_i, t)\in F} m(p_i)$. $m(p_i) = 1$ and $p_i$ has a token, so $t$ is enabled. Since places in OEN can have at most one token, for all $(t, p_j) \in F$, $m'(p_j) = m(p_j) \, + \, 1$ is maximally 1. This is equivalent to the truth value $\top$, which means that $p_j$ has a token. Therefore, every place connected to $t$ has a token $\land_{(t, p_j) \in F} m'(p_j)$.
\end{proof}
Lemma \ref{lemma:step} shows that when a transition $t$ fires, a step updates the truth value of places. As a consequence of a step (a fired transition), in the new marking, tokens are re-assigned to other places whose truth values now become $\top$. Now, we consider the set of all token-to-place assignments. This set is a power set $\Omega = 2^P$ and it is the union of all possible markings (each marking is also a set of places). From the property shown in Lemma \ref{lemma:step}, we define an operator called step consequence $\mathcal{T}_F: \Omega \to \Omega$.

\smallskip\noindent\textbf{Definition 3 (Step consequence operator $\mathcal{T}_F$)} \textit{Given an OEN = (P, T, F), $\mathcal{T}_F$ is an operator over the power set $\Omega$: }
    \begin{flalign}
        \forall m \in \Omega, \mathcal{T}_F(m) = \{ p_j \in P \, | \, m'(p_j) \gets \land_{(p_i, t) \in F} m(p_i) &\nonumber \\ 
        \text{ where} \, \{p_1, ..., p_n \} \subseteq m \text{ for } 1 \leq i \leq n, (t, p_j) \in F, m' \in \Omega \}& \nonumber
    \end{flalign}

Definition 3 says that the result of applying $\mathcal{T}_F$ to a marking is a set of places reachable from it. $\mathcal{T}_F(\emptyset)$ only includes the initial marking due to the antecedent of the implication being false. To obtain the union of all places reachable from an empty marking, we can apply $\mathcal{T}_F$ repeatedly to its results. The recursive application of $\mathcal{T}_F$ can be defined as $\mathcal{T}_F{\uparrow^0} = \emptyset$ and $\mathcal{T}_F{\uparrow^n} = \mathcal{T}_F(\mathcal{T}_F{\uparrow^{n-1}})$. This corresponds to a lattice.

\smallskip\noindent\textbf{Proposition 3} $(\mathcal{T}_F, \subseteq)$ is a complete lattice.
\begin{proof}
    The proof follows from Definition 3. $(\mathcal{T}_F, \subseteq)$ is a complete lattice since $\Omega$ is a set partially ordered by set inclusion $\subseteq$ where $\mathcal{T}_F$ is an operator over the power set $\Omega$. The top element $P$ is the union of elements in $\Omega$ and the bottom element $\emptyset$ is the intersection of elements in $\Omega$. 
\end{proof}

Let $(P,T,F)$ be an arbitrary OEN. Using Proposition 3, we show $\mathcal{T}_F$ over the lattice $(\mathcal{T}_F, \subseteq)$ has a fixpoint. We refer to Tarski's fixpoint theorem \cite{tarski_fixpoint}, which states a monotonic operator on a complete lattice has a least fixpoint. We prove that $\mathcal{T}_F$ is a monotonic operator.

\begin{lemma}
    $\mathcal{T}_F$ is a monotonic operator over $\Omega$. For all $m_1, m_2 \in \Omega$, if $m_1 \subseteq m_2$ then $\mathcal{T}_F(m_1) \subseteq \mathcal{T}_F(m_2)$.
    \label{lemma:monotonicity}
\end{lemma}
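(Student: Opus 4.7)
The plan is to prove monotonicity directly from Definition 3 by a straightforward element chase. First, I would fix arbitrary markings $m_1, m_2 \in \Omega$ with $m_1 \subseteq m_2$, and take an arbitrary $p_j \in \mathcal{T}_F(m_1)$. By Definition 3, this means that there exist some transition $t \in T$ and a set of input places $\{p_1, \ldots, p_n\} \subseteq m_1$ such that $(p_i, t) \in F$ and $m_1(p_i) = \top$ for each $i$, together with an output arc $(t, p_j) \in F$.

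The next step is to transfer this witness from $m_1$ to $m_2$. Since $m_1 \subseteq m_2$ as subsets of $P$, every $p_i$ in the witness set also lies in $m_2$, so $m_2(p_i) = \top$. Consequently the conjunction $\bigwedge_{(p_i, t) \in F} m_2(p_i)$ holds, which by Definition 3 places $p_j$ into $\mathcal{T}_F(m_2)$. Since $p_j$ was arbitrary, this gives $\mathcal{T}_F(m_1) \subseteq \mathcal{T}_F(m_2)$, which is the claim.

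The main obstacle is essentially bookkeeping rather than any deep argument: one has to be careful that the two representations of a marking used throughout the paper — as a subset of $P$ and as a boolean function $P \to \{0,1\}$ — are interchanged consistently, i.e. $p \in m$ iff $m(p) = \top$. Once this equivalence is applied uniformly, the monotonicity follows from the fact that the firing precondition for a transition $t$ depends only on the presence of its input places, a condition that is preserved under enlarging the marking. Note that the one-boundedness assumption of the OEN plays no role in this direction; monotonicity is inherent to the enabling condition and would hold for more general Petri nets as well. With Lemma \ref{lemma:monotonicity} and Proposition 3 in hand, Tarski's fixpoint theorem then immediately supplies the least fixpoint $\mathcal{T}_F{\uparrow^\omega}$ that is invoked later in the proof of Theorem \ref{theorem:equivalence}.
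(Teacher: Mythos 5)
Your proof is correct and rests on the same key observation as the paper's: the enabling condition $\bigwedge_{(p_i,t)\in F} m(p_i)$ depends only on the presence of the input places of $t$ and is therefore preserved when the marking is enlarged. The paper packages this as a three-way case analysis over arbitrary $p \in P$ (its Case~2 is exactly your element chase, while its other cases are vacuous for establishing the inclusion), so your direct version is essentially the same argument presented more economically.
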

\begin{proof}
    Assume $m_1 \subseteq m_2$ and we prove by enumerating possible cases. Take arbitrary $p \in P$. 
    
    Case 1: $p \in m_1$ and $p \in m_2$. By Definition 3, if arc $(t, p) \in F$ then $p \in \mathcal{T}_F(m_1), \mathcal{T}_F(m_2)$ otherwise $p \not\in \mathcal{T}_F(m_1), \mathcal{T}_F(m_2)$. 
    
    Case 2: $p \not \in m_1$, $t$ is enabled in $m_1$. Based on the assumption, places in $m_1$ with a token also have a token in $m_2$, $t$ must also be enabled in $m_2$. So $p \in \mathcal{T}_F(m_1), \mathcal{T}_F(m_2)$. 
    
    Case 3: $p \not \in m_1$, $t$ is not enabled in $m_1$ or $(t, p) \not \in F$. Then $p \not \in \mathcal{T}_F(m_1)$. So whether $t$ is enabled or not in $m_2$ does not affect $\mathcal{T}_F(m_1) \subseteq \mathcal{T}_F(m_2)$. 
    
    In all cases, $\mathcal{T}_F(m_1) \subseteq \mathcal{T}_F(m_2)$.
\end{proof}

We show that the least fixpoint of $\mathcal{T}_F$ is the union of reachable places by proving that $\mathcal{T}_F$ is a compact operator.

\begin{lemma}
    $\mathcal{T}_F$ is a compact operator over $\Omega$. For all $m_1 \in \Omega$, if $p \in \mathcal{T}_F(m_1)$ then $p \in \mathcal{T}_F(m_2)$ for a finite $m_2 \subseteq m_1$.
    \label{lemma:compactness}
\end{lemma}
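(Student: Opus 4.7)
The plan is to unpack Definition 3 and extract, from the witness for $p \in \mathcal{T}_F(m_1)$, the particular transition responsible for producing $p$; the finitely many input places of that transition will then serve as the desired finite witness $m_2$.

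First, I would observe that, by Definition 3, the hypothesis $p \in \mathcal{T}_F(m_1)$ supplies a transition $t \in T$ with $(t, p) \in F$ together with a list of input places $p_1, \ldots, p_n$ satisfying $(p_i, t) \in F$ for $1 \leq i \leq n$, $\{p_1, \ldots, p_n\} \subseteq m_1$, and $\bigwedge_{i=1}^{n} m_1(p_i) = \top$. Setting $m_2 := \{p_1, \ldots, p_n\}$ immediately yields a set with $m_2 \subseteq m_1$, and $m_2$ is finite because the OEN $(P,T,F)$ has finite structure, so any one transition has only finitely many incoming arcs.

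Second, I would verify that the same witness $t$ still works after restricting from $m_1$ to $m_2$. Every $p_i$ with $(p_i, t) \in F$ lies in $m_2$ by construction, so $m_2(p_i) = \top$ for each such $i$, and hence $\bigwedge_{(p_i, t) \in F} m_2(p_i) = \top$. Re-applying Definition 3 with input marking $m_2$, the same transition $t$ together with its output arc $(t, p) \in F$ certifies $p \in \mathcal{T}_F(m_2)$, completing the argument.

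There is no genuine obstacle: the only subtle point worth making explicit is that the set of input arcs $\{(p_i,t) \in F\}$ attached to any single transition is finite, which follows from the standing finiteness convention for Petri nets. Once this is noted, the proof is essentially a re-reading of Definition 3 with the same transition $t$ used to witness membership in both $\mathcal{T}_F(m_1)$ and $\mathcal{T}_F(m_2)$.
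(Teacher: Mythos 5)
Your proof is correct and follows essentially the same route as the paper's: both extract the single transition $t$ witnessing $p \in \mathcal{T}_F(m_1)$, take $m_2$ to be exactly the input places of $t$, and observe that $t$ remains enabled in $m_2$ so that Definition 3 (equivalently, Lemma \ref{lemma:step}) again yields $p \in \mathcal{T}_F(m_2)$. Your explicit remark that $m_2$ is finite because a transition has only finitely many incoming arcs is a small but welcome addition that the paper's own proof leaves implicit.
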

\begin{proof}
    Assume $p \in \mathcal{T}_F(m_1)$. From Definition 3 and the assumption, a transition $t$ and a marking $m_1$ exist such that $\land_{(p_i, t)\in F} m_1(p_i)$. Take a marking $m_2$ that contains these places in $m_1$ so that $m_2$ = $\{ \, p_i \, | $ for arcs $(p_i, t), (t, p) \in F \} \subseteq m_1$. From Lemma 1, we know that a marking $m'$ exists since $m'(p) \gets \land_{(p_i, t)\in F} m_2(p_i)$ and $m' \subseteq \mathcal{T}_F(m_2)$, so $p \in \mathcal{T}_F(m_2)$.
\end{proof}

\smallskip\noindent\textbf{Proposition 4} The operator $\mathcal{T}_F$ has a least fixpoint equal to $\mathcal{T}_F{\uparrow^\alpha} = \cup_{n \geq 0}\mathcal{T}_F{\uparrow^n}$.
\begin{proof}
    $\mathcal{T}_F$ is a monotonic function. According to Tarski's fixpoint theorem \cite{tarski_fixpoint}, the function $\mathcal{T}_F$ has a least fixpoint $\mathcal{T}_F{\uparrow^\alpha}$ for some $\alpha \geq 0$. Since $\mathcal{T}_F$ is also a compact operator, if $p \in \mathcal{T}_F(\mathcal{T}_F{\uparrow^\alpha})$ then $p \in \mathcal{T}_F(\mathcal{T}_F{\uparrow^n})$ for some $n \geq 0$ and  $\mathcal{T}_F{\uparrow^n} \subseteq \mathcal{T}_F{\uparrow^\alpha}$. The union of subsets of $\mathcal{T}_F{\uparrow^\alpha}$ is $\cup_{n \geq 0}\mathcal{T}_F{\uparrow^n}$, which covers every member of the least fixpoint. Therefore, $\mathcal{T}_F{\uparrow^\alpha} = \cup_{n \geq 0}\mathcal{T}_F{\uparrow^n}$.
\end{proof}

Proposition 4 shows that when we repeatedly apply the step consequence operator to the empty set, the least fixpoint is the union of reachable places. This allows us to compute this least fixpoint by iteratively computing steps (fire transitions) and expanding the set of reachable places. A linear and immediately recursive program transformed from an OEN is logically equivalent to the OEN for this fixpoint semantics. 

\begin{theorem}
    Let $\mathcal{P}$ be a datalog program transformed from an OEN = (P, T, F). For constant symbols $p_1$, $p_2$ in $\mathcal{P}$, $\mathcal{P} \models r(p_1,p_2)$ if and only if $p_2$ is reachable from $p_1$ in the OEN.
    \label{theorem:appendix_equivalence}
\end{theorem}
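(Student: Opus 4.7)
The plan is to connect reachability in the OEN with derivability in $\mathcal{P}$ via their respective least fixpoint semantics. Proposition 4 already characterises reachable places as members of markings in $\mathcal{T}_F\uparrow^\omega$, and on the logic side the standard result $M(\mathcal{P}) = \mathcal{T_P}\uparrow^\omega$ characterises the entailed ground atoms. My task is therefore to show that these two fixpoint chains march in lockstep under the transformation of Definition 1, which reduces the biconditional to a pair of inductions of matching length.

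First I would prove the forward direction. Assuming $p_2$ is reachable from $p_1$, Proposition 4 supplies a finite sequence of markings $m_0, m_1, \ldots, m_k$ with $p_1 \in m_0$, $p_2 \in m_k$, and $m_{i+1}$ obtained from $m_i$ by firing some enabled transition. I would induct on $k$. In the base case, the transformation's bridging fact $r_1(m_0, p^*)$ plus the transition fact supplied by Definition 1 yield $r(p_1, p_2)$ through the clause $r_2(X,Y) \leftarrow r_1(X,Y)$. In the inductive step, the recursive clause $r_2(X,Y) \leftarrow r_1(X,Z), r_2(Z,Y)$ chains the freshly-fired transition onto the derivation of $r(p_1, z)$ provided by the induction hypothesis, placing $r(p_1, p_2)$ in $\mathcal{T_P}\uparrow^{k+1}$.

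For the converse I would assume $\mathcal{P} \models r(p_1, p_2)$, pick the least $n$ with $r(p_1, p_2) \in \mathcal{T_P}\uparrow^n$, and induct on $n$. Each derivation either uses the base clause, which by the construction in Definition 1 exhibits a single enabled transition taking a marking containing $p_1$ to one containing $p_2$, or it splits via the recursive clause into $r_1(p_1, z)$ and $r_2(z, p_2)$, to which the induction hypothesis applies; concatenating the step sequences gives the required reachability. The two inductions are symmetric, and together with Proposition 4 they close the biconditional.

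The main obstacle, and the reason care is needed throughout, is the hypernode bookkeeping imposed by Definition 1. A transition with several input places is encoded by a single constant $p^*$, and the fact $r_1(m, p^*)$ is added only when \emph{every} member of $p^*$ is tokenised in $m$. The forward argument must therefore check that the truth-value invariant in Lemma 1, $\land_{(p_i, t) \in F} m(p_i) \to \land_{(t, p_j) \in F} m'(p_j)$, lines up exactly with the conjunctive premise that enables the bridging fact; the backward argument must conversely guarantee that every derivable $r_1$-fact reflects a genuinely enabled transition rather than a partial marking that would be illegal in the net. Once this correspondence between an $\mathcal{T}_F$-step and a single application of $\mathcal{T_P}$ routed through the hypernode vocabulary is established, the remaining steps are routine.
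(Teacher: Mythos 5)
Your proposal is correct and follows essentially the same route as the paper's own proof: both directions are established by relating the step-consequence fixpoint $\mathcal{T}_F{\uparrow^\omega}$ (via Proposition 4) to the immediate-consequence fixpoint $\mathcal{T_P}{\uparrow^\omega}$, matching fired transitions with ground facts and applications of the base and recursive clauses from Definition 1. Your version merely makes explicit the two inductions (on step-sequence length and on fixpoint stage) and the hypernode bookkeeping that the paper's proof leaves implicit.
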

\begin{proof}
    Prove the forward implication. Assume $\mathcal{P} \models r(p_1,p_2)$. We know that $r(p_1,p_2)$ is in the least model. Based on the definition of the Immediate Consequence Operator \cite{van_emden_semantics_1976}, $r(p_1,p_2) \gets \, B_1,..., B_i$ ($i \geq 0$) is a clause in $\mathcal{P}$ and $\{B_1,..., B_i \} \subseteq \mathcal{T_P}{\uparrow^\omega}$. From Definition 1 in the main paper, we know that $B_1,..., B_i$ are ground facts describing a set of transitions. There exists $n \geq 0$ such that $p_1 \in \mathcal{T}_F{\uparrow^n}$. Based on $B_1,..., B_i$, we can apply the operator $\mathcal{T}_F$ to $\mathcal{T}_F{\uparrow^n}$. From Definition 3 and proposition 4, we know $p_2 \in \mathcal{T}_F{\uparrow^\alpha}$ so there exists some marking $m \in \Omega$ such that $m(p_2) = \top$. Therefore, $p_2$ is reachable from $p_1$.  
    
    Now we prove the backward implication. Assume $p_2$ is reachable from $p_1$ in the OEN. Since both $p_1$ and $p_2$ are reachable from the initial marking, we know $p_1, p_2\in \mathcal{T}_F{\uparrow^\alpha}$. From Definition 3 and Proposition 4, there exists a finite sequence of steps (fired transitions) $m \xrightarrow[]{t_1} m_1 \xrightarrow[]{t_2} ... \xrightarrow[]{t_{k}} m'$ and $m(p_1) = m'(p_2) = \top$. From Definition 1 in the main paper, the transformed program contains these transitions as ground facts $\{B_1,..., B_k\} \subseteq \mathcal{B_P}$ ($k \geq 0$) and recursive clauses $r(X,Y) \gets r_1(X,Y)$ and $r(X,Y) \gets r_1(X,Z), r(Z,Y)$. We can ground the recursive clauses such that $r(p_1,p_2) \gets B_1,..., B_k$. By applying the $\mathcal{T_P}$ operator, we know $r(p_1,p_2) \in \mathcal{T_P}{\uparrow^\omega}$ so $\mathcal{P} \models r(p_1,p_2)$. 
\end{proof}
Theorem \ref{theorem:appendix_equivalence} shows the correctness of identifying reachable places in an OEN by evaluating the transformed linear and immediately recursive program. In our BMLP framework, we leverage low-level boolean matrix operations to evaluate linear and immediately recursive programs, which improves runtime efficiency in computing reachability in OENs.  
\section{Boolean matrix compilation}
\label{sec:appendix_matrices}
Our BMLP framework evaluates a linear and immediately recursive program by first compiling it into boolean matrices represented as datalog. We show examples of compiled boolean matrices based on the following linear and immediately recursive program. 
\begin{gather}
\mathcal{P}_1:\left\{
\begin{aligned}
    flight(c_1,c_2).  \quad flight(c_2,c_3).&\\
    route(X,Y) \gets flight(X,Y). &\\ 
    route(X,Y) \gets flight(X,Z), route(Z,Y). &\nonumber
\end{aligned}
\right\}
\end{gather}
This program has three constant symbols $c_1, c_2, c_3$ and two ground facts $flight(c_1,c_2), flight(c_2,c_3)$. So the number of constant symbols is $n = 3$ and the number of ground facts is $k = 2$. In Section 5 of the main paper, we propose two BMLP algorithms, iterative extension (BMLP-IE) and repeated matrix squaring (BMLP-RMS) for evaluating linear and immediately recursive programs. Both algorithms are implemented in SWI-Prolog and write boolean matrices as Prolog program files to be loaded and computed. These files contain binary codes that are written as integers. Each binary code corresponds to a set of constant symbols. Since BMLP-IE and BMLP-RMS use boolean matrices of different sizes as input, we show two examples of compiled boolean matrices. 

\subsection{Iterative extension (BMLP-IE)}
BMLP-IE takes two $k \times n$ boolean matrices as input. We will use $flight1$ and $flight2$ to refer to them. The first argument of $flight$ is mapped to $flight1$ and the second argument is mapped to $flight2$. The first ground fact $flight(c_1,c_2)$ in $\mathcal{P}_1$ is represented by the first row of boolean matrices (ordered from 0). Each bit in the binary code represents a constant symbol. The binary code 001 (written as 1) in $flight1(0,1)$ indicates that $c_1$ is the first argument in $flight(c_1,c_2)$. The binary code 010 (written as 2) in $flight2(0,2)$ denotes that $c_2$ is the second argument of $flight(c_1,c_2)$.
\begin{verbatim}
flight1(0,1). 
flight2(0,2).
\end{verbatim}
The second row of boolean matrices represents the second ground fact $flight(c_2,c_3)$.
\begin{verbatim}
flight1(1,2). 
flight2(1,4). 
\end{verbatim}
The binary code 100 (written as 4) in $flight2(1,4)$ means that $c_3$ is the second argument of $flight(c_2,c_3)$. Conceptually, this compilation splits every ground fact in the initial program into two ground facts in the boolean matrices.

BMLP-IE additionally takes a $1 \times n$ vector input to represent the constant symbol in a partially grounded clause, e.g. $flight(c_1,X)$. We write this vector as a binary code. In this case, the binary code's value is 001, which means that $c_1$ partially grounds the clause. 

\subsection{Repeated matrix squaring (BMLP-RMS)}
Only a $n \times n$ boolean matrix is given as an input to BMLP-RMS. We refer to this matrix as $flight1$. If the i-th constant symbol is related to the j-th constant symbol, the j-th bit of the i-th binary code will be 1. Since $c_1$ is related to $c_2$ in $flight(c_1,c_2)$, the second bit is 1 in the first row of the boolean matrix. The binary code is 010 (written as 2) in $flight1(0,2)$ (ordered from 0). $c_2$ is related to $c_3$ in $flight(c_2,c_3)$, so the third bit is 1 in the second row of the boolean matrix. The binary code is 100 (written as 4) in $flight1(1,4)$.  
\begin{verbatim}
flight1(0,2). 
flight1(1,4).
flight1(2,0).
\end{verbatim}
Finally, the third row is 0 because the third constant symbol $c_3$ is not related to any constant symbol.  We can consider a constant symbol as a node in a graph. The flight relations are the edges between nodes. The compiled $n \times n$ matrix is the incidence matrix, which describes the paths between nodes.
\section{Experiment materials}
\label{sec:appendix_exp}
For Prolog systems, we enable tabling to ensure termination and faster grounding re-evaluations. No additional clauses are needed for Clingo since Clingo computes the least Herbrand models by grounding and searching for stable models \cite{gelfond_stable_1988}. 

\subsection{Computing reachability in airplane flight routes} 
We generate artificial OENs by creating transitions from randomly sampled pairs of cities. OENs are transformed into linear and immediately recursive programs by rewriting OEN's places as constant symbols and transitions as predicates:
\begin{verbatim}
city(c1).
city(c18).
flight(c1,c18). 
...
route(X,Y):-flight(X,Y).  
route(X,Y):-flight(X,Z), route(Z,Y).
\end{verbatim}
In \textbf{sub-tasks 1}, we evaluate BMLP-IE's runtime in identifying reachable places from the city $c_1$ (the initial marking). We use B-Prolog, SWI-Prolog, XSB-Prolog and Clingo to enumerate groundings of $route(c1, Y)$. For Prolog systems, we introduce partially grounded clauses to enumerate provable ground facts:  
    \begin{verbatim}
closure:-route(c1,_C), fail.
closure.
\end{verbatim}
We compile $route(c1, Y)$ into a vector for BMLP-IE to compute its transitive closure. In \textbf{sub-tasks 2}, we assess BMLP-RMS's runtime for computing reachable places from all cities. We use B-Prolog, SWI-Prolog, XSB-Prolog and Clingo to find the least model of the datalog program. We write two clauses for Prolog systems to enumerate all provable ground facts:
\begin{verbatim}
closure:-route(_C1,_C2), fail.
closure.
\end{verbatim}
We compile all ground facts into a square boolean matrix for BMLP-RMS to return its transitive closure. 

\subsection{Computing producible substrates in metabolic network models} 
We represent the genome-scale metabolic network model iML1515 \cite{iML1515} as an OEN. Its transitions are reactions and places are sets of substrates involved in reactions. A set of related reactions is often referred to as a metabolic pathway. We then transform the OEN into a linear and immediately recursive program:  
\begin{verbatim}
substrates(ctp_c_h2o_c). 
reaction(ctp_c_h2o_c,cdp_c_h_c_pi_c).
...
metabolic_path(X,Y):-reaction(X,Y). 
metabolic_path(X,Y):-reaction(X,Z),
                     metabolic_path(Z,Y).
\end{verbatim}
From an initial marking of substrates, we use BMLP-IE to find substrates producible by the reactions in iML1515. 
\begin{verbatim}
closure:-metabolic_path(initial,_C),
         fail.
closure.
reaction(initial,ctp_c_h2o_c).
\end{verbatim}
We write two clauses for Prolog systems to enumerate all substrates producible by reactions. In this case, the initial marking assigns a token to the place ``ctp\_c\_h2o\_c''.

\section{Experiment results}
\label{sec:appendix_results}
In Table \ref{tab:exp_1_1}, \ref{tab:exp_1_2} and \ref{tab:exp_2} (at the end of the document), we report the mean and standard deviation of CPU time in addition to Section 6 of the main paper. We compare BMLP-IE and BMLP-RMS against B-Prolog, SWI-Prolog, XSB-Prolog and Clingo in tasks from two domains, graph and biological data. 

\newpage
\begin{table*}[ht]
    \centering
    \begin{tabular}{lrrrrr}
        \toprule
        Method & 1000 & 2000 & 3000 & 4000 & 5000 \\
        \midrule
        B-Prolog + tabling      & $0.001 \pm 0.001$ & $0.574 \pm 0.379$ & $4.499 \pm 0.147$ & $12.996 \pm 0.310$ & $28.104 \pm 2.700$        \\
        Clingo                  & $0.042 \pm 0.018$ & $9.806 \pm 1.135$ & $44.872 \pm 2.402$ & $125.917 \pm 3.397$ & $254.750 \pm 8.047$         \\
        \textbf{BMLP-IE$^\dagger$}    & $\textbf{0.007} \pm \textbf{0.004}$ & $\textbf{0.098} \pm \textbf{0.032}$ & $\textbf{0.135} \pm \textbf{0.049}$ & $\textbf{0.196} \pm \textbf{0.018}$ & $\textbf{0.297} \pm \textbf{0.034}$        \\
        SWI-Prolog + tabling    & $0.001 \pm 0.000$ & $2.781 \pm 1.888$ & $20.699 \pm 0.979$ & $51.906 \pm 1.004$ & $100.211 \pm 4.328$        \\
        XSB-Prolog + tabling    & $0.001 \pm 0.000 $ & $0.341 \pm 0.116$ & $2.075 \pm 0.072$ & $5.869 \pm 0.085$ & $12.213 \pm 0.287$  \\
        \bottomrule
    \end{tabular}
    \caption{Runtime for computing reachable places from one city. The probability $p_t$ of transitions (ground facts) is 0.001 but the No. places (constant symbols) $n$ varies in steps \{1000, 2000, 3000, 4000, 5000\}. BMLP-IE has a much lower runtime than the other methods for $n$ from 2000 to 5000. $^\dagger$BMLP-IE is implemented in SWI-Prolog. $^*$Prolog systems have tabling enabled. Since B-Prolog and XSB-Prolog can only print small decimal numbers as 0 for the CPU time statistics, we set the runtime lower bound to 0.001 seconds. }
    \label{tab:exp_1_1}
\end{table*}
\begin{table*}[ht]
    \centering
    \begin{tabular}{lrrrrrr}
        \toprule
        Method & 0.0001 & 0.001 & 0.01 & 0.1 & 0.5 & 1 \\
        \midrule
        B-Prolog$^*$ &  $0.001 \pm 0.000$ & $0.003 \pm 0.003$ & $2.042 \pm 0.036$ & $15.132 \pm 0.522$ & $74.635 \pm 1.173$ & $153.576 \pm 5.039$ \\
        Clingo & $0.004 \pm 0.001$  & $0.031 \pm 0.009$ & $5.660 \pm 0.059$ & $32.359 \pm 0.417$ & $236.174 \pm 1.384$ & timeout \\
        \textbf{BMLP-RMS$^\dagger$}    & $\textbf{1.561} \pm \textbf{0.297}$ & $\textbf{3.894} \pm \textbf{0.421}$ & $\textbf{6.140} \pm \textbf{0.172}$ & $\textbf{5.485} \pm \textbf{0.328}$ & $\textbf{2.871} \pm \textbf{0.126}$ & $\textbf{1.979} \pm \textbf{0.080}$ \\
        SWI-Prolog$^*$ & $0.001 \pm 0.000$  & $0.001 \pm 0.001$ & $7.516 \pm 0.267$ & $70.747 \pm 1.382$ & timeout & timeout \\
        XSB-Prolog$^*$ & $0.001 \pm 0.000$ & $0.003 \pm 0.001$ & $1.938 \pm 0.047$ & 1$5.974 \pm 1.585$ & $65.501 \pm 2.528$ & $109.778 \pm 4.230$\\
        \bottomrule
    \end{tabular}
    \caption{Runtime for computing reachable places from all cities. No. places (constant symbols) $n$ = 1000 and the probability $p_t$ of transitions (ground facts) varies in steps \{0.0001, 0.001, 0.01, 0.1, 0.5, 1\}. BMLP-RMS has a much lower runtime than the other methods for $p_t$ from 0.1 to 1. $^\dagger$BMLP-RMS is implemented in SWI-Prolog. $^*$Prolog systems have tabling enabled. Since B-Prolog and XSB-Prolog can only print small decimal numbers as 0 for the CPU time statistics, we set the runtime lower bound to 0.001 seconds. }
    \label{tab:exp_1_2}
\end{table*}
\begin{table*}[ht]
    \centering
    \begin{tabular}{lrrrrr}
        \toprule
        Method & 1 & 10 & 100 & 1000 \\
        \midrule
        B-Prolog$^*$      & $0.002 \pm 0.001$ & $0.003 \pm 0.001$ & $0.016 \pm 0.004$ & $1.193 \pm 0.374$\\
        Clingo                  &  $1.853 \pm 0.067$ & $2.054 \pm 0.172$ & $4.993 \pm 0.670$ & $61.121 \pm 11.399$     \\
        \textbf{BMLP-IE$^\dagger$}    & $\textbf{0.029} \pm \textbf{0.005}$ & $\textbf{0.042} \pm \textbf{0.017}$ & $\textbf{0.066} \pm \textbf{0.013}$ & $\textbf{0.095} \pm \textbf{0.011}$\\
        SWI-Prolog$^*$  & $0.012 \pm 0.001$ & $0.015 \pm 0.003$ & $0.091 \pm 0.021$ & $10.907 \pm 3.411$\\
        XSB-Prolog$^*$ & $0.001 \pm 0.000$ & $0.001 \pm 0.001$ & $0.009 \pm 0.002$ & $1.015 \pm 0.307$\\
        \bottomrule
    \end{tabular}
    \caption{Runtime for computing producible substrates in the metabolic network model iML1515. We test different initial markings by randomly sampling 1, 10, 100, and 1000 from 1877 substrates. BMLP-IE has the lowest runtime for the most challenging task when the number of substrates in the initial marking is 1000. $^\dagger$BMLP-IE is implemented in SWI-Prolog. $^*$Prolog systems have tabling enabled. }
    \label{tab:exp_2}
\end{table*}

\end{document}